\newtheorem{proposition}{Proposition}
\newcommand{\MKL}[2]{\mathcal{M}^{#1}_{#2}}
\newcommand{\mc}{\mathcal}
\newcommand{\tsc}[1]{\mbox{\large\textsc{#1}}}
\newcommand{\tuple}[1]{\langle #1 \rangle}
\title{Generalizing the Role of Determinization in Probabilistic Planning}
\author{Luis Pineda \and Shlomo Zilberstein \\ 
College of Information and Computer Sciences \\ 
University of Massachusetts, Amherst, MA 01003\\
\{lpineda, shlomo\}{@cs.umass.edu}}
\begin{document}

\maketitle

\begin{abstract}
The stochastic shortest path problem (SSP) is a highly expressive model for probabilistic planning.  The computational hardness of SSPs has sparked interest in determinization-based planners that can quickly solve large problems.  However, existing methods employ a simplistic approach to determinization.  In particular, they ignore the possibility of tailoring the determinization to the specific characteristics of the target domain.  In this work we examine this question, by showing that learning a good determinization for a planning domain can be done efficiently and can improve performance.  Moreover, we show how to directly incorporate probabilistic reasoning into the planning problem when a good determinization is not sufficient by itself.  Based on these insights, we introduce a planner, FF-LAO*, that outperforms state-of-the-art probabilistic planners on several well-known competition benchmarks.

\end{abstract}

\section{Introduction}
One of the most popular models for probabilistic planning is the Stochastic Shortest Path SSP)~\cite{bertsekas1991analysis}. A solution to an SSP is a sequence of actions that, starting from a given initial state, minimizes the expected cost of reaching a state from a given set of goal states. The uncertainty associated with the outcome of each action induces probabilistic transitions between states. This model has been used for a wide variety of applications, such as route planning in the presence of traffic delays~\cite{lim2013practical}, quantifying the value of battery energy storage systems~\cite{tan2015stochastic}, modeling wildfire propagation~\cite{hajian2016modeling}, and semi-autonomous driving~\cite{wray2016hierarchical}. 

Despite its popularity, a major obstacle to the wide applicability of SSPs is the prohibitive computational cost of finding an optimal solution. This is due to the fact that optimal policies can, in the worst case, cover a number of states linear in the size of the state space, which in turn can be exponential in the number of variables describing the problem. 

Given the difficulty of solving SSPs optimally, there has been much interest in developing methods that sacrifice optimality for the sake of computational efficiency. Among these methods, one of the most successful approaches has been the use of \emph{determinization} complemented by replanning. These methods became popular thanks to the unexpected success of FF-Replan~\cite{yoon2007ff} in the International Probabilistic Planning Competition (IPPC)~\cite{bryce20086th}. FF-Replan works by transforming the original problem into a deterministic one (e.g., by only considering most probable outcomes), and using the FF classical planner~\cite{hoffmann2001ff} to solve the resulting problems. This process is then repeated during execution whenever a state not covered by the current deterministic plan is observed.

Although FF-Replan is significantly faster than any optimal probabilistic planner, it performs poorly in many problem domains due to the fact that it ignores the uncertainty. This has led to the development of more robust determinization-based algorithms, that incorporate some form of probabilistic reasoning. 

RFF~\cite{teichteil2010incremental}---the winner of IPPC'08---works by incrementally aggregating partial plans until the result covers a high probability envelop of states; each partial plan is computed using determinization and the FF planner. FF-Hindsight~\cite{yoon2008probabilistic} works by sampling a set of deterministic ``futures'' of the original problem, solves each using FF, and combines their cost to estimate the costs in the original problem. HMDPP~\cite{keyder2008hmdpp} introduces a self-loop determinization trick that nudges the deterministic planner into generating plans with low probability of deviation. SSiPP-FF~\cite{trevizan2014depth} works by creating short-sighted problems that consider only states up to a certain horizon from the current state. These smaller problems are solved optimally, and when a tip state is found during execution, the FF-Replan method is used.

Although these planners are capable of quickly solving large problems, they all employ a rather crude approach to determinization, by generally relying on either the most-likely-outcome (MLO) determinization, or the all-outcomes (AO) one. Using MLO can make planners ignore important sections of the state space (e.g., outcomes leading to dead-ends), resulting in poor policies. Using AO has other problems such as potentially wasting computation on irrelevant/unlikely sections of the state space, or treating all paths to the goal as equally important. 

Recent work has shown that the choice of determinization can sometimes have a significant impact in the quality of a determinization-based planning approach~\cite{PZicaps14}. In fact, even in some domains deemed ``probabilistically interesting''~\cite{little2007probabilistic} planning with a good determinization can actually result in optimal plans for the original SSP (e.g., the triangle tireworld domain).
However, not much work has been done on generating methods for choosing a determinization that works well for a particular domain.

In this work, we address these issues with two main contributions.
First, we present a new planner, FF-LAO*, that combines the LAO* optimal SSP solver~\cite{HZaij01} with the FF classical planner. FF-LAO* can leverage fast deterministic planning to estimate state values, but still partially reason about the complete probabilistic model if so-desired; it does this by relying on the \emph{reduced models} framework introduced by~\citeauthor{PZicaps14}~[\citeyear{PZicaps14}]. 

Our second contribution is showing that it is possible to learn a good determinization on small instances of a planning domain, such that, when applied to larger instances, significant gains in efficiency and performance can be realized. This is the first work that selects a determinization based on the anticipated performance in the original probabilistic domain.

The rest of the paper is structured as follows: Section~\ref{sec:background} gives background on SSPs and reduced models, Section~\ref{sec:fflao} describes the FF-LAO* algorithm, Section~\ref{sec:choosing-red} explains how to choose a good determinization for a particular planning domain, Section~\ref{sec:experiments} presents experimental results, and Section~\ref{sec:conclusion} summarizes the conclusions and ideas for future work. 

\section{Background}
\label{sec:background}
A Stochastic Shortest Path (SSP) problem~\cite{bertsekas1991analysis} is defined by a tuple $\tuple{S, A, T, C, s_0, G}$, where $S$ is a finite set of states, $A$ is a finite set of actions, $T(s'|s,a) \in [0,1]$ represents the probability of reaching state $s'$ when action $a$ is taken in state $s$, $C(s,a) \in [0,\infty)$ is the cost of applying action $a$ in state $s$, $s_0$ is an initial state and $G$ is a set of goal states satisfying $\forall s_g \in G, a \in A,~T(s_g|s_g, a) = 1 \wedge C(s_g,a)=0$. Moreover, we assume costs satisfy $\forall s \in S\!\setminus\!G,  C(s,a)>0$.
Interestingly, SSPs are a variant of Markov Decision Process (MDPs)~\cite{Puterman94}  that has been shown to be more general than finite-horizon and infinite-horizon discounted MDPs~\cite{bertsekas1995neuro}.

A solution to an SSP is a \emph{policy}, a mapping $\pi:S \rightarrow A$, indicating that action $\pi(s)$ should be taken at state $s$. A policy $\pi$ induces a value function $V^{\pi}:S \rightarrow \mathbb{R}$ that represents the expected cumulative cost of reaching $s_g \in G$ by following policy $\pi$ from state $s$. An optimal policy $\pi^*$ is one that minimizes this expected cumulative cost; similarly, we use the notation $V^*$ to refer to the optimal value function.

For an SSP to be well-defined, a policy must exist such that the goal is reachable from any state with probability 1. Under this assumption, an SSP is guaranteed to have an optimal solution, and the optimal value function is unique. This optimal value function can then be found as the fixed point of the Bellman update operator (Eq. \ref{eq:bellman}).
\begin{equation}
V(s) = \min_{a \in A} \Big \{ C(s,a) + \sum_{s' \in S} T(s'|s,a)V(s') \Big \}
\label{eq:bellman}
\end{equation}

There is a variety of languages to compactly describe SSPs, of which PPDDL~\cite{younes2004ppddl1} has been most widely used within the AI community. As it turns out, it has been shown that finding whether a plan exists in a compactly described problem is EXPTIME-complete~\cite{littman1997probabilistic}. As mentioned earlier, this challenging complexity has led to the development of many approximate methods for solving SSPs. Particularly relevant to our work are determinization-based approaches, and, more generally, the reduced models framework.

\subsection{Reduced Models}
Given an SSP, the reduced models framework creates a simplified model characterized by two parameters: the number of outcomes per action that are fully accounted for (referred to as \emph{primary} outcomes), and the maximum number of occurrences of the remaining outcomes that are planned for in advance (referred to as \emph{exceptions}). This type of reduction generalizes single-outcome determinization, and introduces a spectrum of reductions with varying levels of probabilistic 
complexity.

The model assumes a factored representation of SSPs in which actions are probabilistic operators of the form:
\[ a = \tuple{\textit{preconditions},\textit{cost}, [p_1 : e_1, ..., p_m : e_m]}, \]
where each effect $e_i$, for $i \in \{1,...,m\}$, is associated with a probability $p_i$ of occurring when $a$ is executed, and there exists a successor function, $\tau$, that maps effects to successor states, so that $s'=\tau(s,e_i)$ and $T(s'|s,a) = p_i$.

Using this formalism, a \emph{reduced model} of an SSP $\tuple{S, A, T, C, s_0, G}$ is defined as follows:
\begin{itemize}
	\item The set of states is defined as $S' = S \times \{0,1,..,k\}$, where $k$ is a positive integer; 
	\item The set of actions is the original set, $A$;
	\item The transition function is defined by Eq.~(\ref{eq:transition_jlessk});
	\item The cost function is defined as $C'((s,j),a) = C(s,a)$, for all $(s,j) \in S' \wedge a\ \in A$;
	\item The initial state is $(s_0,0)$;
	\item The set of goals is defined as $G' = \{(s,j) \in S' | s \in G \}$.
\end{itemize}

The transition function defined below, while seemingly complicated, describes a simple process. The value $j$ in state $(s,j)$ keeps counts of how many exceptions have occurred up to that point in execution. For states where the count is less than the \emph{exception bound}, $k$, the transition function operates as the original, except that the counter is increased by one for successors labeled as exceptions. On the other hand, for states with count $j=k$, the transition completely ignores exceptions, and only considers transitions to primary outcomes, redistributing the ignored probabilities so that they form a proper distribution (e.g., by normalizing). The notation $\mc{P}_a$ used below refers to the set of primary effects. 

\begin{multline}
\label{eq:transition_jlessk}
T'((s',j') | (s,j),a) = \\ 
\begin{cases}	
p_i & \mathrm{if} ~ j<k \wedge j'=j \wedge e_i \in \mc{P}_a \\
p_i & \mathrm{if} ~ j<k \wedge j'=j+1 \wedge e_i \notin \mc{P}_a \\
p'_i & \mathrm{if} ~ j=j'=k \wedge e_i \in \mc{P}_a\\
0 &  \mathrm{if} ~ j=j'=k \wedge e_i \notin \mc{P}_a
\end{cases}
\end{multline}where $s' = \tau(s,e_i)$ and the set $\{p'_1, ..., p'_m \}$ is any set of real numbers that satisfy
\begin{equation}
\forall i: e_i \in \mc{P}_a ~~ p'_i > 0 ~~~\mathrm{and}~~~
\sum\limits_{i : e_i \in \mc{P}_a} p'_i = 1
\end{equation}

Note that the reduced models framework 
encapsulates single-outcome determinization, which is simply a reduction where the set of primary outcomes has size 1 and the value of \mbox{$k=0$}. In this work we are indeed concerned with reductions having a single primary outcome, but will include the possibility of using $k>0$. We refer to these models as $\MKL{k}{1}$-reductions.

\section{FF-LAO*}
\label{sec:fflao}
Reducing an SSP can significantly accelerate planning times by pruning large sections of the state space. However, there are many domains in which solving a reduced model optimally is still prohibitively expensive.
In fact, for complex domains like the ones used in IPPC~\cite{bryce20086th}, using determinization and $k=0$ already results in problems too large to be solved optimally in a practical manner. 

To address this issue, we present a planner that combines the flexibility of the reduced models framework with the efficiency of a classical planner. We call this planner  \tsc{ff-lao*}, as it is an extension of the \tsc{lao*} algorithm~\cite{HZaij01} that leverages the \tsc{ff} classical planner~\cite{hoffmann2001ff} to accelerate computation.  

\tsc{ff-lao*} (Algorithms~\ref{alg:ff-lao}-\ref{alg:ff-bellman}) receives as input an \mbox{$\MKL{k}{1}$-reduction}, \mbox{$\mathbb{S}'\!=\!\tuple{S', A, T', C', (s_0, 0), G'}$}; i.e., one that becomes deterministic after the exception bound is reached (equivalently, one where \mbox{$\forall a \in A, |\mc{P}_a| = 1$}). The remaining inputs are the exception bound, $k$, and the error tolerance, $\epsilon$. We use~~$\mathbb{S}$ to denote the original SSP from which $\mathbb{S}'$ is derived.

\tsc{ff-lao*} works almost exactly as \tsc{lao*}, except that \tsc{ff} is used to compute values and actions for states that have reached the exception bound (i.e., states of the form $(s,k)$). This occurs in lines~4 and~8 of Algorithm \ref{alg:ff-lao}, where the state expansion and test convergence procedures are replaced with versions that use FF (Algorithms~\ref{alg:ff-expand} and~\ref{alg:ff-test}, respectively). 

Readers familiar with \tsc{lao*} may notice differences with respect to the usual expansion and convergence test procedures. In particular, note the inclusion of \emph{if} statements in line~7 (both procedures), where the successors of the expanded state are only added to the stack if $j<k$. The reason is that states $(s,k)$ will be solved by calling FF, so there is no need to expand their successors. 

It is possible, of course, to remove these \emph{if} statements and let \tsc{ff-lao*} continue the search; in that case, \tsc{ff} will be used as an inadmissible heuristic. However, this does not improve the theoretical properties of the algorithm (neither version is optimal), and results in higher computation times, so we prefer the version shown in the pseudocode.

The actual call to FF is done in Algorithm \ref{alg:ff-bellman} (\tsc{ff-bellman-update}). This procedure performs a Bellman update (Eq. \ref{eq:bellman}) for any state $(s,j)$ with $j < k$, and stores the updated cost estimate and best action in global variables $V[(s,j)]$ and $\pi[(s,j)]$, respectively (lines~6-7). For simplicity of presentation, we use the following action-value function:

\begin{smaller}
	\begin{equation*}
	Q((s, j), a) \equiv C'((s,j),a) + \sum_{(s',j')} T'((s',j')|(s,j),a)V[(s',j')]
	\end{equation*}
\end{smaller}and assume, as is common for heuristic search algorithms, that the values $V[(s',j')]$ are initialized using an admissible heuristic for $\mathbb{S}'$.

For states $(s,k)$, the \tsc{ff-bellman-update} procedure creates a PDDL file\footnote{In practice, we create the PDDL file representing $M$ before calling \tsc{ff-lao*} and store its name in memory. \tsc{create-pddl} is shown for simplicity of presentation.}, denoted as $D$,  representing the deterministic problem induced by $M$ when $j=k$, with initial state $s$ (\tsc{create-pddl} in line 3). The procedure then calls \tsc{ff} with input $D$ (line 4) and memoizes costs and actions for all the states visited in the plan computed by \tsc{FF} (lines 5-7). More concretely, for each state $s_i$ visited by this plan, we set $V[(s_i,k)]$ to be the cost, according to $C'$, of the plan computed by \tsc{FF} for that state (line 6), and set $\pi[(s_i,k)]$ to be the corresponding action (line 7). Additionally, note that the estimates $V[(s,k)]$ are not admissible, even with respect to the input $\MKL{k}{1}$-reduction, since $\tsc{FF}$ is not an optimal planner for deterministic problems. Finally, in the case that \tsc{ff} returns failure, we set $V[(s,k)]=\infty$ and $\pi[(s,k)]=\textrm{NOP}$. 

\tsc{ff-bellman-update} also returns the residual, defined as the absolute difference between the previous cost estimate, and the estimate after applying the Bellman equation. This residual is used by \tsc{ff-test-convergence} to check the stopping criterion of the algorithm.

\begin{algorithm}[t]
	\smaller
	\textbf{input}: $\mathbb{S}'\!=\!\tuple{S', A, T', C', (s_0, 0), G'}$, $k$, $\epsilon$ \\
	\nl \While{\textbf{true}} {
		// \textit{Node expansion step} \\
		\nl \While{\textbf{true}} {
			\nl \textit{visited} $\leftarrow \emptyset$ \\
			\nl \textit{cnt} $\leftarrow$ \tsc{ff-expand}$\big(\mathbb{S}', (s,j), k, \textit{visited}\big)$ \\
			\nl \If {$ \textit{cnt} = 0$} {
				// \textit{No tip nodes were expanded, so current policy is closed} \\
				\textbf{break} \\
			}
		}
		// \textit{Convergence test step} \\
		\nl \While{\textbf{true}} {
			\nl \textit{visited} $\leftarrow \emptyset$ \\
			\nl \textit{error} $\leftarrow$ \tsc{ff-test-convergence}$\big(\mathbb{S}', (s,j), k, \textit{visited}\big)$ \\
			\nl \If {$ \textit{error} < \epsilon$} {\textbf{return} // \textit{solution found}\\}
			\nl \If {$ \textit{error} = \infty$} {\textbf{break} // \textit{change in partial policy, go back to expansion step} \\}
		}
	}
	\caption{\tsc{ff-lao*}}
	\label{alg:ff-lao}
\end{algorithm} 

\begin{algorithm}[t]
	\smaller
	\textbf{input}: $\mathbb{S}'\!=\!\tuple{S', A, T', C', (s_0, 0), G'}$, $(s,j)$, $k$, \textit{visited} \\
	\nl \If{$(s, j) \in \textit{visited}$}{\textbf{return} 0}
	\nl $\textit{visited} \leftarrow \textit{visited} \cup \{(s,j)\}$ \\
	\nl $\textit{cnt} = 0$ \\
	\nl \If{$\pi[(s,j)] = \emptyset$} {
		\textit{// Expand this state for the first time} \\
		\nl \tsc{ff-bellman-update}$\big(\mathbb{S}',(s,j),k\big)$ \\
		\nl \textbf{return} 1 \\
	}
	\nl \ElseIf{$j < k$}{
		\nl \ForAll{$(s',j')~\textit{s.t.}~T'((s',j')|(s,j),\pi[(s,j)]) > 0$}{
			\nl \textit{cnt} += \tsc{ff-expand}$\big(\mathbb{S}', (s,j), k, \textit{visited}\big)$ \\}
	}
	\nl \tsc{ff-bellman-update}$\big(\mathbb{S}',(s,j),k\big)$ \\ 
	\nl \textbf{return} \textit{cnt} \\
	\caption{\tsc{ff-expand}}
	\label{alg:ff-expand}
\end{algorithm}

\begin{algorithm}[t]
	\smaller
	\textbf{input}: $\mathbb{S}'\!=\!\tuple{S', A, T', C', (s_0, 0), G'}$, $(s,j)$, $k$, \textit{visited} \\
	\nl \If{$s \in \textit{visited}$}{\textbf{return} 0}
	\nl $\textit{visited} \leftarrow \textit{visited} \cup \{(s,j)\}$ \\
	\nl $\textit{error} = 0$ \\
	\nl $a \leftarrow \pi[(s,j)]$ \\
	\nl \If{$a = \emptyset$} {
		\textit{the test reached a state that hasn't been expanded yet} \\
		\nl \textbf{return} $\infty$ \\
	}
	\nl \ElseIf{$j < k$}{
		\nl \ForAll{$(s',j')~\textit{s.t.}~T'((s',j')|(s,j),\pi[(s,j)]) > 0$}{
			\nl \textit{error} = $\max\big(\textit{error}$, \tsc{ff-test-convergence}$\big(\mathbb{S}', (s,j), k, \textit{visited}\big)\big)$ \\}
	}
	\nl \textit{error}  \!=\! $\max\!\big(\textit{error}$, \tsc{ff-bellman-update}$\big(\mathbb{S}'\!,(s,j),k\big)\!\big)$\\
	\nl \If{$\pi(s,j) \neq a$}{
		\nl \textbf{return} $\infty$ // \textit{the policy changed} \\}
	\nl \textbf{return} \textit{error} \\
	\caption{\tsc{ff-test-convergence}}
	\label{alg:ff-test}
\end{algorithm}

\begin{algorithm}[t]
	\smaller
	\textbf{input}: $\mathbb{S}'\!=\!\tuple{S', A, T', C', (s_0, 0), G'}$, $(s,j)$, $k$ \\
	\textbf{output}: \textit{error} \\
	\nl $V' \leftarrow V[(s,j)]$ \\
	\nl \If{$j=k$} {
		\nl $D \leftarrow$ \tsc{create-pddl}$(\mathbb{S}',s)$ \\
		\nl $\{s_1, a_1, s_2, a_2, ..., s_L, a_L\} \leftarrow$ \tsc{call-ff}$(D)$ \\
		\nl \For{$1 \leq L$}{
			\nl $V[(s_i, k)] \leftarrow \sum_{i \leq x \leq L} C'((s_x, k), a_i)$ \\			
			\nl $\pi[(s_i, k)] \leftarrow a_i $ \\
		}
	}
	\nl \Else{
		\nl $V[(s,j)] \leftarrow \min_a Q((s,j), a) $ \\
		\nl $\pi[(s,j)] \leftarrow \arg \min_a Q((s,j), a)$ \\
	}
	\nl \textbf{return} $\left| V[(s,j)] - V' \right|$
	\caption{\tsc{ff-bellman-update}}
	\label{alg:ff-bellman}
\end{algorithm} 

\paragraph{Handling plan deviations during execution}
While \tsc{ff-lao*} solves $\MKL{k}{1}$-reductions, the ultimate goal is to solve the SSP from which the reduction is derived from. As mentioned before, we use $\mathbb{S}$ to denote this SSP. It is easy to see that a complete policy for $\mathbb{S'}$ is not necessarily complete for $\mathbb{S}$. Therefore, during execution we need to be able to handle deviations from the plan returned by \tsc{ff-lao*}.

We use a replanning approach to address this issue, \tsc{ff-lao*-replan}, illustrated in Algorithm~\ref{alg:ff-lao-replan}. The idea is simple: during execution, check if the current state has an action already computed with $j=0$. If that's the case, this action is executed (line~7). Otherwise, \tsc{ff-lao*} is called to solve the reduced model with initial state $(s, 0)$~(lines 5-6). \tsc{ff-lao*-replan} receives the choice of determinization as input ($\Delta$), and creates an $\MKL{k}{1}$-reduction accordingly (line~1).

Note that there are other choices for the replanning criterion. For example, checking if there is any $j \in [0;k]$ such that $(s,j) \in \pi$. Another alternative is to keep track of exceptions during execution, and set the value of $j$ accordingly; in this case, $j$ should be set to 0 after re-planning. Other alternatives are possible. We choose the one used by \tsc{ff-lao*-replan} because it is, in principle, the more robust choice, given that we use the maximum ``look-ahead'' every time\footnote{Note that this is not guaranteed to be better than using $j>0$, since pathological scenarios can be created where increasing $k$ leads to worse plans.}. However, if computational efficiency is a concern, other alternatives might be better. We leave a more in depth analysis of these choices for future work.

\begin{algorithm}[t]
	\smaller
	\textbf{input}: $\mathbb{S}\!=\!\tuple{S, A, T, C, (s_0, 0), G}, \Delta, k, \epsilon$ \\
	\nl $\mathbb{S'} \leftarrow$ \tsc{create-reduction}$(\mathbb{S}, \Delta)$ \\
	\nl $s \leftarrow s_0$ \\
	\nl \While {$s \notin G$} {
		\nl \If{$(s, 0) \notin \pi$}{
			\nl \tsc{replace-initial-state}$(\mathbb{S}', (s, 0))$ \\
			\nl \tsc{ff-lao*}($\mathbb{S}', k, \epsilon$) \\
		}
		\nl $s \leftarrow$ \tsc{execute-action}$(s, \pi[(s, 0)])$
	}
	\caption{\tsc{ff-lao*-replan}}
	\label{alg:ff-lao-replan}
\end{algorithm} 

\paragraph{Theoretical considerations} 
We now show conditions under which \tsc{ff-lao*} is guaranteed to succeed. The following definition will be useful: a \emph{proper policy rooted at }$s$ is one that reaches a goal state with probability 1 from every state it can reach from $s$. 

\begin{proposition}
	Given an admissible heuristic for the reduced model~~$\mathbb{S}'$, if~~$\mathbb{S}'$ has at least one proper policy rooted at $(s_0, 0)$, then \tsc{ff-lao*} is guaranteed to find one in finite time. 
\end{proposition}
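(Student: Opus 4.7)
The plan is to split the claim into termination of \tsc{ff-lao*} and properness of the returned policy, and to reduce each to a single structural observation about the input: because $\mathbb{S}'$ is an $\MKL{k}{1}$-reduction, every state of the form $(s,k)$ induces a \emph{deterministic} classical-planning problem whose reachability structure coincides with the primary-outcome subgraph of $\mathbb{S}'$ rooted at $s$. In particular, if $\mathbb{S}'$ admits a proper policy rooted at $(s_0,0)$, then for each $(s,k)$ visited by that policy there is a deterministic path to a goal in the PDDL problem produced by \tsc{create-pddl}, and by completeness of \tsc{ff} on solvable classical problems, \tsc{call-ff} returns a finite plan, so \tsc{ff-bellman-update} assigns a finite $V[(s,k)]$ and a well-defined $\pi[(s,k)]$.

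For termination, the key idea is to reinterpret \tsc{ff-lao*} as ordinary \tsc{lao*} on an auxiliary SSP $\mathbb{S}''$ that agrees with $\mathbb{S}'$ on all transitions at states with $j<k$, but at each $(s,k)$ replaces the action set with a single macro-action that deterministically follows the \tsc{ff}-returned plan. Because \tsc{ff} is a deterministic procedure, $\mathbb{S}''$ is well-defined; it inherits a proper policy rooted at $(s_0,0)$ from $\mathbb{S}'$ (namely, the original proper policy on $j<k$ strata together with the FF macro at each reachable $(s,k)$). Moreover, any heuristic admissible for $\mathbb{S}'$, extended by the \tsc{ff} plan cost at $(s,k)$ (which equals $V^*_{\mathbb{S}''}(s,k)$ exactly, since that macro is the only action available), is admissible for $\mathbb{S}''$: $h \le V^*_{\mathbb{S}'} \le V^*_{\mathbb{S}''}$ because restricting actions can only increase the optimum. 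The standard convergence theorem for \tsc{lao*} on an SSP with at least one proper policy and an admissible heuristic then gives termination of \tsc{ff-lao*} in finitely many iterations with residual below $\epsilon$.

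For properness at the returned policy, I would argue that when Algorithm~\ref{alg:ff-lao} returns with $\mathit{error}<\epsilon$, the greedy partial graph $G_\pi$ rooted at $(s_0,0)$ is closed, every $(s,j)\in G_\pi$ has $\pi[(s,j)]\neq\emptyset$ (else \tsc{ff-test-convergence} would have returned $\infty$), and $V[(s_0,0)]<\infty$. Any trajectory generated by $\pi$ stays in $G_\pi$: at $j<k$ nodes it follows the Bellman-greedy action, and at $(s,k)$ nodes it follows the memoized \tsc{ff} plan, which deterministically reaches a goal in a bounded number of steps. A state with $V=\infty$ cannot lie in $G_\pi$, since backups along the path from $(s_0,0)$ would then force $V[(s_0,0)]=\infty$, contradicting finiteness. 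Hence $\pi$ reaches a goal from $(s_0,0)$ with probability $1$.

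The hard part will be justifying the reinterpretation as \tsc{lao*} on $\mathbb{S}''$, because the FF-assigned values $V[(s,k)]$ are in general inadmissible \emph{with respect to} $\mathbb{S}'$ (FF is sub-optimal), which blocks a direct appeal to the classical LAO* proof on $\mathbb{S}'$. The remedy, as sketched, is to formally construct $\mathbb{S}''$, verify that \tsc{ff-expand} and \tsc{ff-test-convergence} act on $\mathbb{S}''$ exactly as LAO*'s standard expansion and convergence test do, and confirm that the composite heuristic is admissible for $\mathbb{S}''$. A secondary subtlety is that this construction presumes \tsc{ff} is deterministic, so that \tsc{call-ff} returns the same plan on every invocation at the same state; this is mild and satisfied by standard implementations, but should be stated as a hypothesis.
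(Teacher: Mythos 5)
Your overall strategy matches the paper's: both proofs treat the level-$k$ states as terminal states whose costs are fixed by the FF plan (your auxiliary SSP $\mathbb{S}''$ is a more formal rendering of the paper's ``the resulting set of values and terminal states form a well-defined SSP''), both observe that admissibility of the heuristic at states with $j<k$ is preserved because FF's sub-optimality makes its plan costs \emph{upper} bounds on the true optimal values, and both use completeness of FF to conclude that any $(s,k)$ lying on a proper policy receives a finite cost, so the proper policy survives into the modified problem. Your explicit properness argument and the remark that FF must behave deterministically across invocations are refinements the paper leaves implicit.

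The one genuine gap is in your termination step. You invoke ``the standard convergence theorem for \tsc{lao*} on an SSP with at least one proper policy and an admissible heuristic,'' but $\mathbb{S}''$ is not such an SSP: at states $(s,k)$ where FF fails --- which will happen whenever the search expands a level-$k$ state off the proper policy --- the state is assigned $V=\infty$ and becomes a dead-end. The classical \tsc{lao*} guarantee assumes the goal is reachable from every state in the explicit graph; in the presence of (even avoidable) dead-ends the vanilla algorithm is not guaranteed to converge, and the paper is explicit about this: it appeals to the framework of MDPs with avoidable dead-ends and notes in a footnote that only the \emph{improved} variant of \tsc{lao*}, which performs depth-first post-order Bellman backups, handles this case. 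To close your argument you would need either to invoke that stronger convergence result for $\mathbb{S}''$, or to show directly that the infinite values assigned on FF failure are propagated and avoided correctly by \tsc{ff-expand} and \tsc{ff-test-convergence}. This is exactly the step your sketch delegates to a theorem whose hypotheses $\mathbb{S}''$ does not satisfy.
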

\begin{proof}
	Whenever \tsc{ff-lao*} expands a state $(s,k)$ and calls \tsc{ff} on this state, if the call succeeds, the states $s_i$, for $i\in[1,...,L]$, that are part of the plan computed by \tsc{ff} essentially become terminal states of the problem, with costs set as in line 6. Since \tsc{FF} is a sub-optimal planner for deterministic problems, we have that $\sum_{i \leq x \leq L} C'((s_x, k), a_i) \geq V[(s_i,k)]$, and thus the values of all other states $(s,j)$, with $j<k$, are guaranteed to be admissible with respect to the new updated value of the added terminal states. Therefore, after every successful call to \tsc{ff}, the resulting set of values and terminal states form a well-defined SSP, which \tsc{LAO*} is able to solve. 
	
	Moreover, in the case that a call to \tsc{ff} fails for some state $\hat{s}$, this state will be assigned an infinite cost, and thus the improved version of \tsc{LAO*} will avoid $\hat{s}$ as long there is some other path to the goal. Because \tsc{ff} is complete, any state belonging to a proper policy will be assigned a positive cost, so $\hat{s}$ couldn't have been part of a proper policy for $M$. Thus, under the conditions of the theorem, every call to \tsc{ff} transforms the problem becomes into an MDP with avoidable dead-ends~\cite{Kolobov12}, which \tsc{lao*} is able to solve\footnote{While this is not true for the original version of \tsc{lao*}, this is true of the so-called \emph{improved} version of \tsc{lao*} that we use in this work, which performs Bellman backups of states in depth-first fashion, in post order traversal.}.
\end{proof}

Unfortunately, as is the case for virtually all replanning algorithms, not much can be guaranteed about the quality of plans found by \tsc{ff-lao*-replan} for $\mathbb{S}$. However, as we show in our experiments, by carefully choosing the input determinization, $\Delta$ and the bound $k$, \tsc{ff-lao*-replan} can find successful policies extremely quickly, even in domains well-known for their computational hardness and the presence of dead-end states. 

\section{Choosing a Good Determinization}
\label{sec:choosing-red}
Many stochastic domains have an inherent structure that make some of their determinizations significantly more effective than others. Consider, for instance, the triangle-tireworld domain~\cite{little2007probabilistic}. The agent has to reach one of the vertices in a planar graph of triangular shape, but after every move there is the possibility of getting a flat tire (see Figure~\ref{fig:triangle}). If this happens, it must get a spare tire before being able to move again. However, spares are only available in certain locations, and there is only a single path from the start to the goal such that all locations in the path have spares. This domain has two possible determinizations, depending on whether a flat tire happens after moving or not. As it turns out, it is possible to get the optimal policy for this problem by planning as if a flat tire will always occur. The interesting part is that this is true for \emph{all} instances of this problem, regardless of size.

\begin{figure}
	\centering
	\includegraphics[width=1.6in]{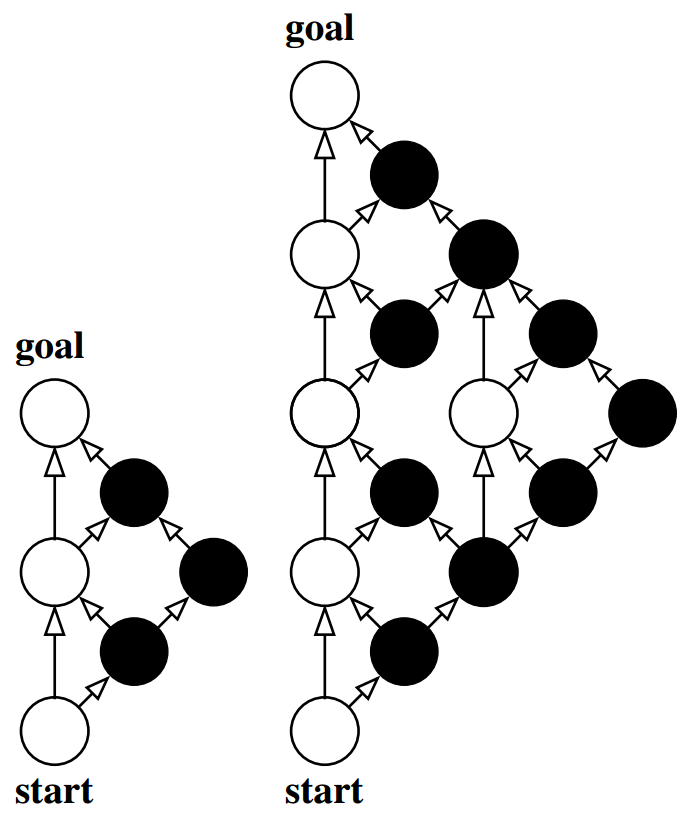}
	\caption{Two instances of the \tsc{triangle-tireworld} domain. Locations with spare tires are marked in black.}
	\label{fig:triangle}
\end{figure}

Triangle-tireworld is a great example of a domain where all domain instances share a probabilistic structure that can be captured by determinization. In practical terms, this means that it is possible to learn a determinization on the smaller problems, and then use it for solving larger ones. 

We adopt this approach for choosing the input determinization to \tsc{ff-lao*-replan}, $\Delta$. We assume a set of problems of varying sizes are available and that it is easy to identify the smaller ones. This is the case for the IPPC benchmarks that we consider in our experiments, as they are typically ordered by problem size/difficulty. However, in general, some analysis of the problem might be required, for example, by counting the number of possible grounded atoms that the problem description induces. Another possibility is to generate small problems automatically from the domain description. This has the advantage that it doesn't require additional problems for learning the determinization to use (besides the actual problem to solve). Although this is certainly an interesting possibility, it requires some finesse and it is outside the scope of this work. 

Algorithm~\ref{alg:choose-det} illustrates \tsc{learning-det}, a brute-force approach to learn a determinization $\Delta$ for domain $\mc{D}$; $\mathbb{S}_l$ represents the problem used for learning. This procedure does a comprehensive search over the space of all determinizations. For each, we estimate the probability of success ($P_i$) and the expected execution cost ($\mathbb{C}_i$) of running \tsc{ff-lao*-replan} on $\mathbb{S}_l$; the costs are estimated using Monte-Carlo simulations. Finally, we pick the determinization with the lowest expected cost, among the ones with the highest probability of success. 

\begin{algorithm}[t]
	\smaller
	\textbf{input}: $\mc{D}, \mathbb{S}_l, k$ \\
	\textbf{output}: $\Delta$ \\
	\nl $\{\Delta_1, ..., \Delta_M\} \leftarrow$ Create all possible determinizations of $\mc{D}$\\
	\nl \ForAll{$i \in \{1,...,M\}$} {
		\nl $P_i, \mathbb{C}_i \leftarrow$ Estimate probability of successs and expected cost of executing \tsc{ff-lao*-replan}($\mathbb{S}_l, \Delta_i, k$) \\
	}
	\nl $P^* \leftarrow \max_i P_i $ \\
	\nl $\Delta \leftarrow \Delta_{\min_i \mathbb{C}_i~\textit{s.t.}~P_i=P^*}$ \\
	\caption{\tsc{learning-det}}
	\label{alg:choose-det}
\end{algorithm} 

There are some subtleties involved in this process. Note that \tsc{ff-lao*} is only guaranteed to terminate if the input reduction has a proper policy from its initial state. This will most likely not be the case for many of the determinizations explored by \tsc{learning-det}; in fact, under some determinizations the goals might be completely unreachable from any state. 

We address this issue using a well-known technique for planning with problems involving dead-ends. In particular, we use a cap $M$ on state costs, including the costs assigned when \tsc{ff} fails, and modify the Bellman backup operator used by \tsc{ff-lao*} as
\begin{equation*}
V(s) = \min \Big\{ M, \min_{a \in A} \Big \{ C(s,a) + \sum_{s' \in S} T(s'|s,a)V(s') \Big \} \Big\} 
\label{eq:bellman-cap}
\end{equation*}
which guarantees the convergence of heuristic search algorithms~\cite{Kolobov12}. While this introduces a new parameter impacting the planner's decisions, and hides the true impact of dead-end states. Note that \tsc{learning-det} still attempts to maximize the multi-objective evaluation criterion typically used when unavoidable dead-ends exist~\cite{Kolobov12,steinmetz2016revisiting}. 

\section{Experiments}
\label{sec:experiments}
\subsubsection{Domains and methodology}
We evaluated \tsc{ff-lao*} and \tsc{learning-det} on a set of problems taken from IPPC'08~\cite{bryce20086th}. Specifically, we used the first 10 problem instances of the following four domains: \tsc{triangle-tireworld}, \tsc{blocksworld}, \tsc{ex-blocksworld}, and \tsc{zenotravel}. Unfortunately, the rest of the IPPC'08 domains are not supported by our PPDDL parser \cite{bonet2005mgpt}. Additionally, we modified the \tsc{ex-blocksworld} domain to avoid the possibility of blocks to be put on top of themselves~\cite{trevizan2014depth}. 

The evaluation methodology was similar to the one used in past planning competitions: we give each planner 20 minutes to solve 50 rounds of each problem (i.e., reach a goal state starting from the initial state). Then we measure its performance in terms of the number of rounds that the planner was able to solve during that time. All experiments were conducted on an Intel Core i7-6820HQ machine running at 2.70GHz with a 4GB memory cutoff. 

We evaluated the planners using the \mbox{MDPSIM}~\cite{younes2005first} client/server program for simulating SSPs, by having planners repeatedly perform the following three steps: 1) connect to the MDPSIM server to receive a state, 2) compute an action for the received state and send the action to the MDPSIM server, and 3) wait for the server to simulate the result of applying this action and send a new state. A simulation ends when a goal state is reached, when an invalid action is sent by the client, or after 2500 actions have been sent by the planner.

We compared the performance of \tsc{ff-lao*} with our own implementations of \tsc{ff-replan} and \tsc{rff}, as well as the original author's implementation of \tsc{ssipp}~\cite{trevizan2014depth}. We evaluated two variants of \tsc{ff-replan}, one using MLO (\tsc{ff$_s$}) and another one using AO (\tsc{ff$_a$}). For \tsc{rff} we used MLO and the \emph{Random Goals} variant, in which before every call to \tsc{ff}, a random subset (size 100) of the previously solved states are added as goal states. Additionally, we used a probability threshold $\rho=0.2$. The choice of these parameters was informed by analysis in the original work~\cite{teichteil2010incremental}. For \tsc{ssipp} we used $t=3$ and the $h_{add}$ heuristic, parameters also informed by the original work~\cite{trevizan2014depth}.

For \tsc{ff-lao*}, we learned a good determinization to use by applying \tsc{learning-det} on the first problem of each domain (p01), with $k=0$. This choice of $k$ was motivated both by time considerations, and by the rationale that $k=0$ should better reflect the impact of each determinization (since \tsc{ff-lao*} becomes a fully determinization-based planner). We used a dead-end cap $\mathcal{D}=500$ throughout our experiments. We initialized values with the non-admissible FF heuristic~\cite{bonet2005mgpt}. 

We ran \tsc{learning-det} offline, prior to the \mbox{MDPSIM} evaluation. Note, however, that the time taken by the brute force search plus the time used to solve problem  p01 with the chosen determinization was, in all cases, well below the 20 minutes limit (approx.~2 minutes in the worst case).
The remaining parameter for \tsc{ff-lao*} is the value of $k$. We report the best performing configuration in the range $k\in[0,3]$, which was $k=0$ for most domains, with the exception of \tsc{ex-blocksworld}, which required $k=3$. Note that \tsc{ff-lao*} with $k=0$ is essentially equivalent to \tsc{ff-replan}, so any advantage obtained over \tsc{ff$_s$} and \tsc{ff$_a$} is completely derived from the choice of determinization. 

\subsubsection{Results and Discussion}
Figure~\ref{fig:results} shows the number of successful rounds obtained by each planner in the benchmarks. In general, \tsc{ff-lao*} either tied for the best, or outperformed the baselines. All planners had a 100\% success rate in \tsc{blocksworld}, so there is not much room for comparison.

In the \tsc{triangle-tireworld} domain, \tsc{ff-lao*} and \tsc{ff$_s$} had 100\% success rate, while \tsc{rff} ran out of time in the last 3 problems. On the other hand, the performance of \tsc{ssipp} and \tsc{ff$_a$} deteriorated quickly as the problem instance increased. It is worth pointing out that the performances of \tsc{ff$_s$} and \tsc{rff} in this domain are quite sensitive to tie-breaking---there are only two outcomes to choose from, each occurring with 0.5 probability. As the results of \tsc{ff$_a$} suggest, a different choice would have resulted in a much worse success rate. On the other hand, the use of \tsc{learning-det} gets around this issue by automatically choosing the best determinization to use, a process that took seconds. While we do note that the \emph{best goals} parameterization of \tsc{rff} gets around this issue, its computational cost is much harder, so it's not obvious that it would actually improve performance in this case~\cite{teichteil2010incremental}.

\begin{figure*}
	\center
	\hspace{-5pt}\includegraphics[width=6.1in]{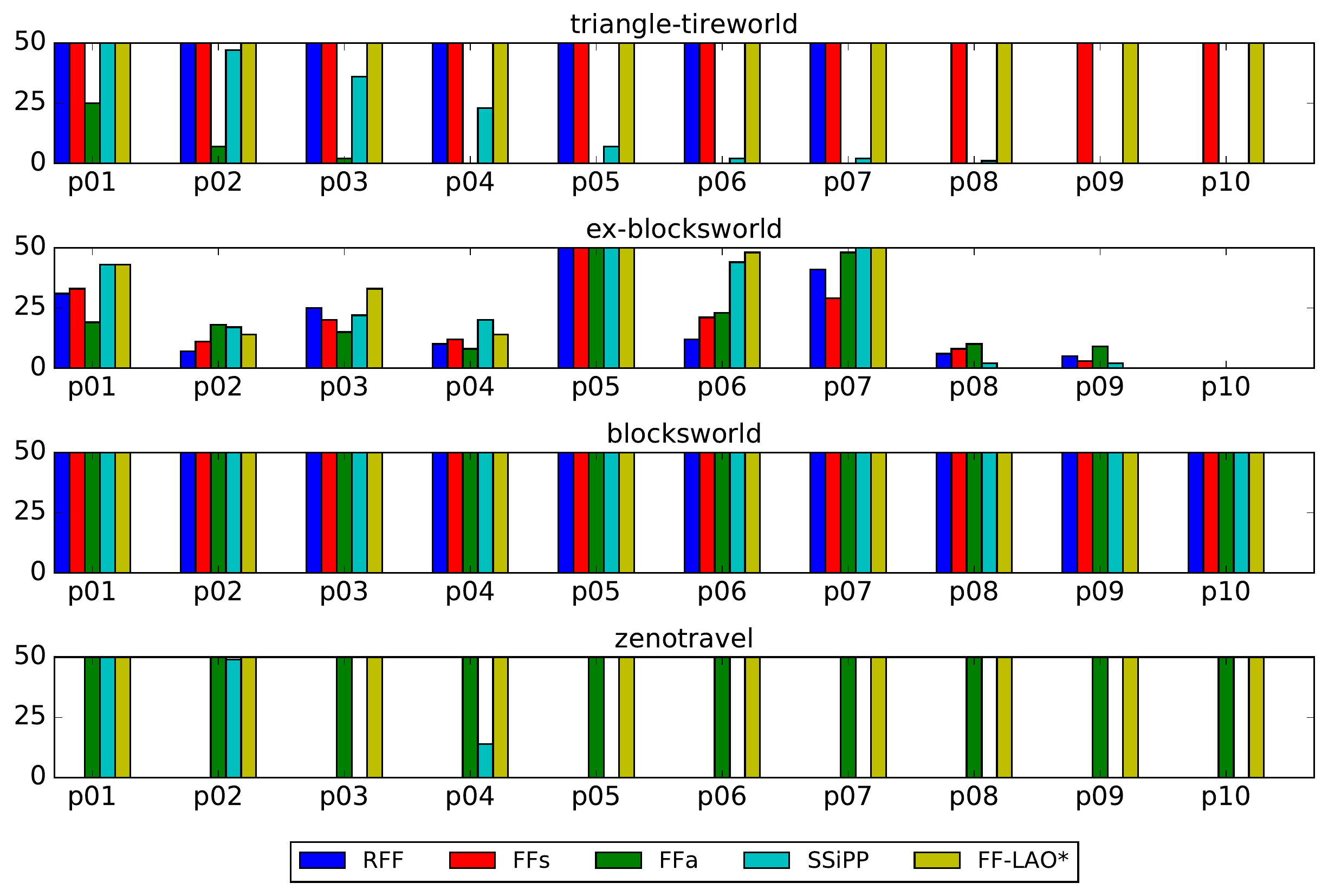}
	\caption{Number of solved rounds by 5 different planners in IPPC'08 benchmarks.}
	\label{fig:results}
\end{figure*} 

In the \tsc{ex-blocksworld} domain \tsc{ff-lao*} (with $k=3$) and \tsc{ssipp} significantly outperform the other two planners, solving 252 and 250 rounds, respectively, against 187 for both \tsc{ff$_s$} and \tsc{rff}, and 200 for \tsc{ff$_a$}. Interestingly, in this domain the determinization found by \tsc{learning-det} is not sufficient to obtain good performance; in fact, only 3 problems had a non-zero success rate with $k=0$. This highlights the utility of doing probabilistic reasoning with \tsc{ff-lao*}. Although not shown here for space considerations, the performance with $k=1$ (214 successful rounds) was already better than all the baselines, except for \tsc{ssipp}.

In \tsc{zenotravel}, \tsc{ff-lao*} and \tsc{ff$_a$} were remarkably better than the other two planners: they achieved 100\% success rate in all domain instances, while the other baselines failed almost all of the rounds. In the case of the determinization-based planners, this is due to the goal becoming unreachable under MLO, so the choice of determinization has a significant impact on performance. 

\section{Conclusion}
\label{sec:conclusion}
In this work we presented a novel perspective on the use of determinization for probabilistic planning, by showing that a careful choice of determinization can outperform state-of-the-art planners. We also introduced a new planner, \tsc{ff-lao*}, that, given a choice of determinization, leverages the power of classical planning algorithms for computational efficiency, but can also reason probabilistically if desired.

We proposed a strategy for selecting a determinization that takes advantage of the inherent structure of a given stochastic domain. We show that the choice of determinization can generalize across problems of varying size, in particular, in terms of its impact on planning performance (probability of success and expected cost).

We compared our approach to state-of-the-art planners for goal-oriented probabilistic problems, using a set of benchmarks taken from the International Probabilistic Planning Competition. Our results strongly support the claim that the choice of determinization can lead to very substantial gains in performance, and position \tsc{ff-lao*}---using our determinization learning approach---as a competitive planner for large stochastic domains.

In future work, we plan to explore ways to vary the choice of determinization according to state-space features, and to develop automated methods for generating problems useful for learning a good determinization of a given domain.

\section{Acknowledgment}
This work was supported in part by NSF Grant No. 1524797.

\balance
\bibliographystyle{named}
\bibliography{rbr-luis}

\end{document}